\theoremstyle{plain}
\newtheorem{theorem}{Theorem}[section]
\theoremstyle{definition}
\theoremstyle{remark}
\title{Cycle Invariant Positional Encoding for Graph Representation Learning\\ 
-- Supplementary Material --}
\author{
Zuoyu Yan \\
Wangxuan Institute of Computer Technology \\
Peking University \\
\texttt{yanzuoyu3@pku.edu.cn}
\And 
Tengfei Ma \\
IBM T. J. Watson Research Center \\
\texttt{tengfei.ma1@ibm.com}
\AND 
Liangcai Gao$^*$ \\
Wangxuan Institute of Computer Technology \\
Peking University \\
\texttt{glc@pku.edu.cn}
\And
Zhi Tang \\
Wangxuan Institute of Computer Technology \\
Peking University \\
\texttt{tangzhi@pku.edu.cn}
\AND
Chao Chen$^*$\\
Department of Biomedical Informatics \\
Stony Brook University \\
\texttt{chao.chen.1@stonybrook.edu}
\And
Yusu Wang\thanks{Correspondence to Yusu Wang, Chao Chen, and Liangcai Gao} \\
Hal{\i}c{\i}o\u{g}lu Data Science Institute \\
University of California \\
\texttt{yusuwang@ucsd.edu}
}
\begin{document}
\maketitle

In this supplementary material, we provide: 
\begin{enumerate}
\item the proof of theorems in Section 4.3;
\item the experimental details that include the assets we used and the limitations of the paper;
\item additional experiments.
\end{enumerate}

\section{Proof of Theorem 4.3 in the main paper}


We begin by introducing the \textit{graph isomorphism}. For a pair of graphs $G_1 = (V_1, E_1)$ and $G_2 = (V_2, E_2)$, if there exists a bijective mapping $f: V_1 \rightarrow V_2$, so that for any edge $(u_1, v_1) \in E_1$, it satisfies that $(f(u_1), f(v_1)) = (u_2, v_2) \in E_2$, then $G_1$ is isomorphic to $G_2$, otherwise they are not isomorphic. Up to now, there is no polynomial algorithm for solving the graph isomorphism problem. One popular method is to use the $k$-order Weisfeiler-Leman~\cite{weisfeiler1968reduction} algorithm ($k$-WL). It is known that  1-WL is as powerful as 2-WL, and for $k \geq 2$, $(k+1)$-WL is more powerful than $k$-WL. 

We then provide the theoretical results below:

\begin{theorem}
\label{th:cyclenet}
CycleNet is strictly more powerful than 2-WL, and can distinguish graphs that are not distinguished by 3-WL.
\end{theorem}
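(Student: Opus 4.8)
The plan is to prove two separate statements: (a) a \emph{lower bound} --- CycleNet is at least as expressive as the $2$-WL test, so that every pair of graphs separated by $2$-WL is also separated by CycleNet; and (b) two explicit \emph{separating pairs} --- one pair that $2$-WL cannot distinguish but CycleNet can (which, together with (a), gives the strict improvement over $2$-WL), and one pair that $3$-WL cannot distinguish but CycleNet can (which gives the second assertion). Claim (a) is essentially immediate from the architecture: CycleNet augments a message-passing backbone with cycle-invariant positional encodings fed in as additional node/edge features, and an MPNN with a sufficiently expressive (e.g.\ injective, GIN-style) update rule already matches $1$-WL $=$ $2$-WL; ignoring the positional-encoding channel therefore recovers at least $2$-WL power, and supplying extra input features can only enlarge the set of distinguishable graphs. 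So the work is to exhibit the two separating pairs and to argue that the cycle positional encoding provably takes different values on each.

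\textbf{Separation from $2$-WL.} I would take $G_1 = C_6$, a single $6$-cycle, and $G_2 = C_3 \sqcup C_3$, the disjoint union of two triangles. Both graphs are $2$-regular on six vertices, so $1$-WL (hence $2$-WL) produces the same stable colouring on them and cannot tell them apart. Their cycle spaces, however, differ: $\dim Z_1(G_1) = 6-6+1 = 1$, generated by one cycle of length $6$, whereas $\dim Z_1(G_2) = 6-6+2 = 2$, generated by two cycles of length $3$. Since CycleNet's positional encoding is a permutation-equivariant function of the edge--cycle incidence structure (shortest / minimal cycles and how they intersect), the induced multiset of edge encodings is different on the two graphs --- for instance every edge of $G_1$ lies on a shortest cycle of length $6$ while every edge of $G_2$ lies on one of length $3$ --- so passing these features through the backbone yields distinct graph-level representations. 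Hence CycleNet separates a pair that $2$-WL does not, and combined with (a) it is strictly more powerful than $2$-WL.

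\textbf{Separation from $3$-WL.} I would use the classical pair of strongly regular graphs with parameters $\mathrm{srg}(16,6,2,2)$: the $4\times 4$ rook's graph $R_{4,4}$ and the Shrikhande graph $S$. It is a standard fact that $R_{4,4}$ and $S$ are non-isomorphic yet indistinguishable by $3$-WL. They have the same number of triangles and $4$-cycles through every vertex and edge, so cycle \emph{counts} do not separate them; what differs is the way short cycles glue together --- for an edge of $R_{4,4}$ the two common neighbours of its endpoints are adjacent, while for an edge of $S$ they are not, which changes the intersection pattern of the triangles (and $4$-cycles) and hence the edge--cycle incidence / associated $1$-dimensional Hodge-Laplacian data that CycleNet's encoder consumes. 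I would argue, along the lines of the cell-complex expressivity results in the literature, that an invariant computable from the positional encoding --- say the multiset of encoding values, or the spectrum of the operator built from the chosen cycle representatives --- takes different values on $R_{4,4}$ and $S$, so CycleNet distinguishes them.

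\textbf{Main obstacle.} The delicate part is the $3$-WL separation: it is not enough that $R_{4,4}$ and $S$ ``look topologically different''; one must isolate a concrete invariant that (i) provably distinguishes them and (ii) is demonstrably recoverable from the specific permutation-invariant cycle encoding used by CycleNet rather than being washed out by the symmetrisation, which typically requires either an explicit computation on the two $16$-vertex graphs or a structural argument about the chosen (shortest / harmonic) cycle representatives; the corresponding verification for $C_6$ versus $C_3 \sqcup C_3$ is by comparison routine. A minor point to record carefully is that appending the positional encoding never lowers expressive power, since it enters purely as extra input to an otherwise unconstrained MPNN, so that (a) together with the two separating pairs gives exactly the two claims of the theorem.
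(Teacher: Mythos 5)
Your overall decomposition (a lower bound via the GIN-style backbone plus explicit separating pairs) matches the paper, and your $2$-WL part is fine: the backbone argument is exactly the paper's, and your $C_6$ versus $C_3\sqcup C_3$ pair is a legitimate (if unnecessary) alternative --- the paper simply reuses the Rook/Shrikhande pair for strictness over $2$-WL, since $2$-WL is no stronger than $3$-WL. The genuine gap is in the $3$-WL half, and you name it yourself under ``Main obstacle'' without closing it: you choose the right pair (the $4\times 4$ Rook graph and the Shrikhande graph), but you never exhibit a concrete invariant that (i) differs on the two graphs and (ii) is provably computable by CycleNet's actual encoder. Saying you ``would argue, along the lines of the cell-complex expressivity results,'' that some multiset or spectrum differs is precisely the step the theorem requires; as stated, the claim that the edge--cycle structure ``changes the encoding'' could in principle be washed out by the symmetrization, which is the very worry you flag. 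The paper closes this by an explicit computation on the object CycleNet really consumes: the orthogonal projector onto the cycle space (kernel of the $1$-Hodge Laplacian). It verifies that every column of the Rook projector has $22$ zeros while every column of the Shrikhande projector has $16$, and then notes that a $2$-$2$ IGN layer can output the projector and an MLP can approximate the absolute-value-based separating function. Some such concrete verification (or an equivalent structural argument about the harmonic space) is what your proof still needs.

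A secondary, related imprecision: you describe CycleNet's positional encoding as built from ``shortest / minimal cycles and how they intersect,'' which conflates the CycleNet of this theorem with the SCB/PEOI variants treated in later theorems. The statement being proved concerns the IGN encoding of the cycle-space projector of the Hodge Laplacian, so both of your separations must be argued for that object (for $C_6$ versus $C_3\sqcup C_3$ this is easy --- the projectors have different traces, namely the Betti numbers $1$ and $2$, and constant diagonals $1/6$ versus $1/3$ --- but it should be said in those terms).
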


\begin{proof}

\textbf{The pair of graphs that 3-WL cannot distinguish while CycleNet can.} It is shown in~\cite{arvind2020weisfeiler} that 3-WL cannot differentiate the $4 \times 4$ Rook Graph and the Shrikhande Graph shown in Figure~\ref{fig:3wl}. We then compute the orthogonal projector of the cycle space of the Hodge Laplacian for each graph and denote them as $O_{rook}$ and $O_{sh}$. We observe that each column of $O_{rook}$ contains 22 zeros, whereas each column of $O_{sh}$ contains 16 zeros. To differentiate between the two graphs, we can use the function $|O_{rook} - O_{sh}|$, which can be approximated using an invariant graph network (IGN) followed by a multilayer perceptron (MLP). Specifically, the 2-2 layer of the IGN can obtain the $O_{rook}$ and $O_{sh}$, and the MLP can approximate the absolute function.

\textbf{More powerful than the 2-WL.} Using models such as~\cite{xu2018powerful} to be the backbone GNNs can distinguish any pair of non-isomorphic graphs that 2-WL can distinguish. Since there exist graphs such as the $4\text{x}4$ Rook Graph and the Shrikhande graph that 2-WL cannot distinguish, while CycleNet can. Therefore, CycleNet is more powerful than 2-WL.
\end{proof}

\begin{figure}[btp]
	\centering
	\subfigure[]{
		\begin{minipage}[t]{0.52\linewidth}
			\centering
			\includegraphics[width=0.95\columnwidth]{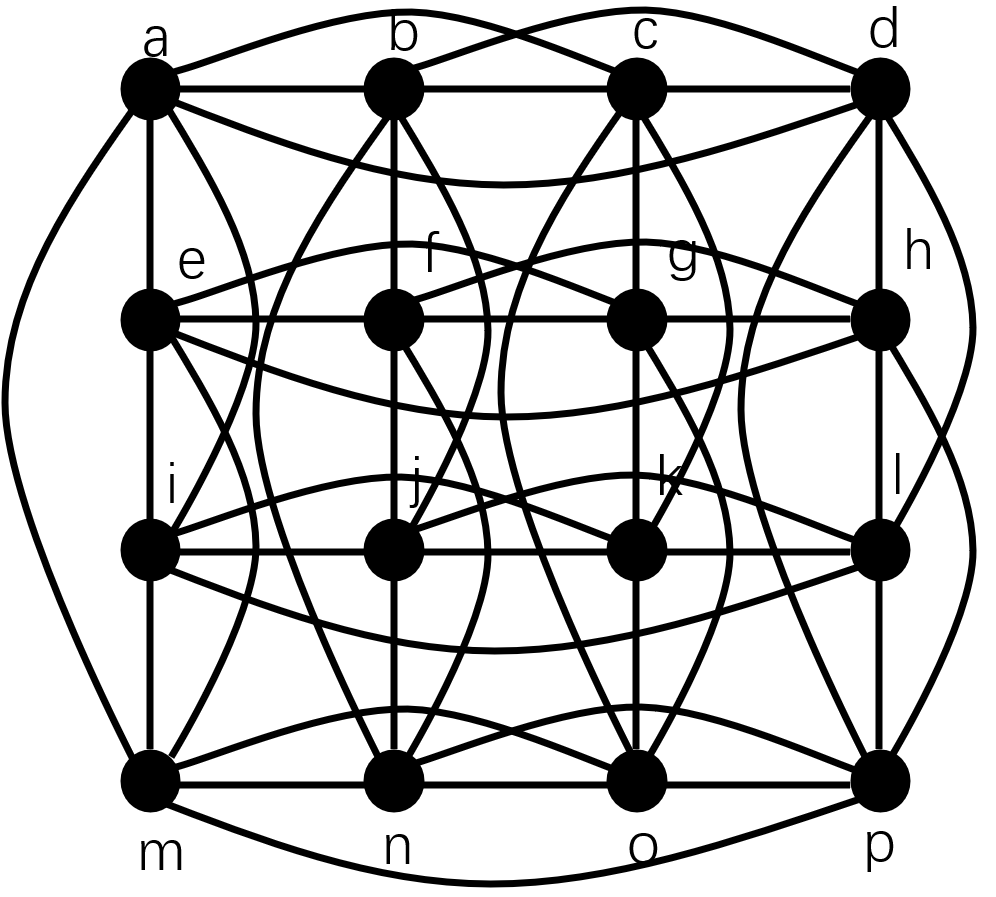}
		\end{minipage}%
	}%
	\subfigure[]{
		\begin{minipage}[t]{0.48\linewidth}
			\centering
			\includegraphics[width=0.95\columnwidth]{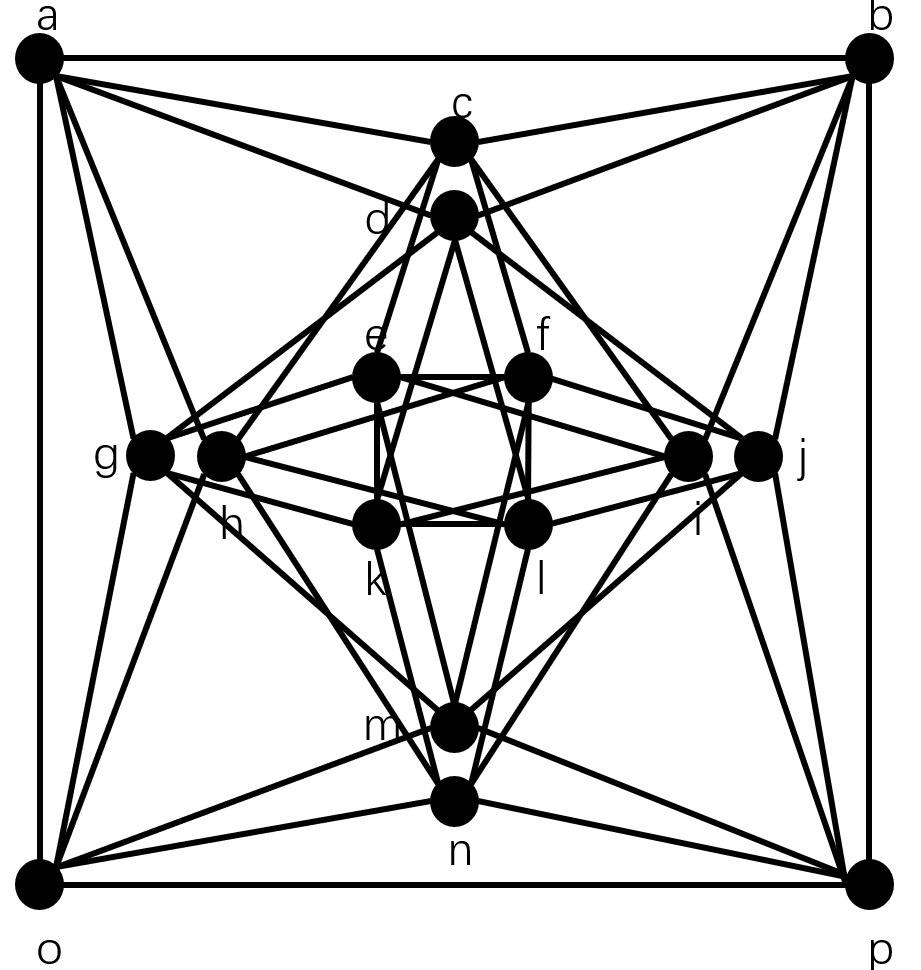}
		\end{minipage}%
	}%
	\centering
	\vspace{-.1in}
	\caption{(a) the $4\text{x}4$ Rook Graph and (b) the Shrikhande Graph}
	\label{fig:3wl}
	\vspace{-.15in}
\end{figure}

\section{Proof of Theorem 4.6 in the main paper}

We restate the theorem as follows:

\begin{theorem}
\label{th:cyclenet_scb}
Using the length of shortest cycle basis as the edge structural embedding can distinguish certain pair of graphs that are not distinguished by 3-WL, as well as pair of graphs that are not distinguished by 4-WL.
\end{theorem}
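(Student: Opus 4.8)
The plan is to prove Theorem~\ref{th:cyclenet_scb} by exhibiting explicit pairs of graphs, one pair separating the shortest-cycle-basis edge embedding from 3-WL and one pair separating it from 4-WL, and then verifying that the multiset of shortest cycle basis lengths (equivalently, the statistic our edge structural embedding is built from) differs across each pair while the corresponding WL algorithm assigns identical colorings. The first step is to recall, as in the proof of Theorem~\ref{th:cyclenet}, that the $4\times 4$ Rook graph and the Shrikhande graph are indistinguishable by 3-WL (shown in~\cite{arvind2020weisfeiler}); since both graphs are strongly regular with the same parameters, every edge is contained in the same number of triangles, but their cycle structure beyond triangles differs. I would compute a shortest cycle basis for each graph and record the multiset of basis-cycle lengths (or, more robustly, the per-edge histogram of lengths of shortest cycles through that edge that appear in a minimum-weight cycle basis). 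The claim is that these multisets differ, so that aggregating the shortest-cycle-basis lengths into edge embeddings and then into a graph-level representation (via an IGN/MLP readout exactly as in the previous proof) yields distinct outputs.

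The second step handles the 4-WL case. Here I would invoke a known pair of strongly regular or distance-regular graphs that 4-WL fails to separate; a standard source is the family of larger CFI-type constructions or the pairs of distance-regular graphs with identical intersection arrays used in the higher-order WL literature (e.g., graphs on which $k$-WL is equivalent to the combinatorial data captured by $k$-tuples of vertices). For such a pair I would again compute a minimum-weight cycle basis for each graph and compare the length multisets; the separation follows if these multisets differ. The argument that our model realizes this comparison is the same in both cases: an invariant graph network can extract the adjacency-derived per-edge features, an MLP can compute the shortest-cycle-basis length statistic up to arbitrary precision on the finite set of values it takes, and a permutation-invariant readout converts the differing multisets into differing scalar outputs.

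The main obstacle I anticipate is two-fold. First, a shortest cycle basis is in general \emph{not unique}, so the "multiset of basis cycle lengths" must be defined in a way that is a genuine isomorphism invariant — I would fix this by taking the invariant to be the sorted vector of cycle lengths in any minimum-weight cycle basis (this length spectrum is known to be an invariant even though the basis itself is not) or, equivalently, by using Horton's characterization of minimum cycle bases to define a canonical per-edge length feature. Verifying that this well-defined invariant actually differs on the chosen pairs is the concrete computational heart of the proof, and for the $4$-WL example it requires carefully selecting a pair whose cycle-length spectra are known (or directly computable) to differ. Second, I must make sure the separating graphs are genuinely not distinguished by the stated WL level: for the 3-WL pair this is cited from~\cite{arvind2020weisfeiler}, but for the 4-WL pair I would need either a citation to an explicit non-distinguished pair or a short CFI-style argument, and threading that citation correctly is the part most likely to need care. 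Everything downstream — the IGN-plus-MLP realizability — is routine given the argument already used for Theorem~\ref{th:cyclenet}.
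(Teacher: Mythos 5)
Your overall strategy is the same as the paper's: exhibit one explicit pair separating the SCB length statistic from 3-WL and one pair separating it from 4-WL, verify that the length spectra of a shortest cycle basis differ across each pair, and note that the model can realize this comparison. For the 3-WL half you even pick the same pair (the $4\times 4$ Rook graph vs.\ the Shrikhande graph), and the paper completes it by the concrete count: the Rook graph's SCB has 24 triangles and 9 four-cycles, while the Shrikhande graph's has 31 triangles and 2 four-cycles. Your observation that the multiset of cycle lengths in a minimum cycle basis is an isomorphism invariant even though the basis itself is not is a worthwhile point that the paper leaves implicit.

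The genuine gap is in the 4-WL half, and you flag it yourself: you never pin down a concrete pair of graphs that 4-WL fails to distinguish, and you never verify that their SCB length spectra differ --- you only describe where such a pair might come from (``larger CFI-type constructions or \ldots distance-regular graphs with identical intersection arrays''). The distance-regular/strongly-regular route is unlikely to deliver a 4-WL-indistinguishable pair; the paper instead takes the CFI graphs $G=G_4^{(0)}$ and $H=G_4^{(1)}$ (with 4-WL-indistinguishability cited from the literature) and then does the decisive computation: in $H$ there is a 4-cycle that cannot be written as a mod-2 sum of 3-cycles, so the SCB of $H$ contains a 4-cycle, whereas in $G$ the corresponding 4-cycle (and every longer cycle) decomposes into 3-cycles, and the SCB of $G$ consists of 281 triangles --- exactly the Betti number --- so it contains no 4-cycle. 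That verification, which you defer as ``the concrete computational heart of the proof,'' is precisely what the statement requires; without naming the pair, citing its 4-WL-indistinguishability, and exhibiting the difference in SCB lengths, the 4-WL claim is unproven. The same remark applies in milder form to the 3-WL half, where you assert but do not check that the Rook/Shrikhande SCB spectra differ.
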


\begin{proof}

\textbf{The pair of graphs that 4-WL cannot distinguish.} Consider the set of graphs called the Cai-F\"urer-Immerman (CFI) graphs~\cite{cai1992optimal}. The sequence of graphs $G_k^{(\ell)}, \ell = 0, 1, \ldots, k+1$ is defined as following,
	\begin{equation}
    \label{eq:cfi}
		\begin{split}
			V_{G_k^{(\ell)}} =& \Big\{u_{a, \vec{v}}\Big| a\in[k+1], \vec{v}\in\{0,1\}^k \text{ and } \vec{v} \text{ contains }\\
			&\quad
            \begin{array}{ll}
				\text{an even number of } 1 \text{'s}, & \text{if }a=1,2,\ldots, k-\ell+1, \\
				\text{an odd number of } 1 \text{'s},  & \text{if }a=k-\ell+2,\ldots, k+1.
			\end{array}\Big\}
		\end{split}
	\end{equation}
	Edges exists between two nodes $u_{a,\vec{v}}$ and $u_{a',\vec{v}'}$ of $G_k^{(\ell)}$ if and only if there exists $m\in [k]$ such that $a' \mod (k+1) = (a+m) \mod (k+1)$ and $v_m = v'_{k-m+1}$. 

Denote the two graphs $G=G_4^{(0)}$ and $H=G_4^{(1)}$. It is shown in~\cite{yan2023efficiently} that 4-WL cannot differentiate the pair of graphs.


 

 \textbf{The SCB can distinguish them.} We begin by presenting the computation of the shortest cycle basis. Let $C_T \in R^{m \times l}$ denote the set of all tight cycles, where $m$ is the number of edges and $l$ is the number of tight cycles. The definition of tight cycles is described in Section 3.3 of the main paper. For a given cycle $j$, $C_T[i][j]$ is equal to 1 if edge $i$ is in cycle $j$, and 0 otherwise. We define $low_{C_T}(j)$ as the maximum row index $i$ such that $C_T[i][j] = 1$. To compute the shortest cycle basis, we use the matrix reduction algorithm, which is shown in Algorithm~\ref{alg:MR}.

 \begin{algorithm}[h]
	\caption{Matrix Reduction}
	\label{alg:MR}
	\begin{algorithmic}
		\STATE {\bfseries Input:} the set of tight cycles $C_T$
		\STATE the shortest cycle basis $SCB=\{\}$
		\STATE$C_T = \text{SORT}(C_T)$
		\FOR{$j=1$ {\bfseries to} $l$}
		\WHILE{$\exists k < j$ with $low_{C_T}(k)=low_{C_T}(j)$}
		\STATE add column $k$ to column $j$ and 
		\ENDWHILE
             \IF{column $j$ is not a zero vector}
		\STATE add the original column $j$ to $SCB$
            \ENDIF
		\ENDFOR
		\STATE {\bfseries Output:} the shortest cycle basis $SCB$
	\end{algorithmic}
\end{algorithm}

In the given algorithm, the symbol ``add" represents the modulo-2 sum of two binary vectors. It should be noted that Algorithm~\ref{alg:MR} may not be the fastest algorithm for computing the SCB, but most acceleration methods are based on it. The algorithm processes the cycles in $C_T$ in order of increasing length, with shorter cycles added to the shortest cycle basis before longer cycles. If any cycle can be represented as a sum of multiple cycles whose lengths are no more than $k$, then the length of the longest cycle in the shortest cycle basis will be $k$. We denote a cycle with length $k$ as a $k$-cycle.

We obtain a total of 40 nodes for $G$ and $H$ by traversing $a$ from 1 to 5 according to Equation~\ref{eq:cfi}. For example, in $G$, node $1$ denotes $u_{1, \{0,0,0,0\}}$, and node $2$ denotes $u_{1, \{0,0,1,1\}}$. We then traverse these nodes to obtain the edges. For example, edge $1$ denotes $(1, 9)$ in $G$, which corresponds to node $u_{1, \{0,0,0,0\}}$ and node $u_{2, \{0,0,0,0\}}$. It is observed that in $H$, a 4-cycle exists between edges $\{8, 9, 24, 25\}$. These edges correspond to four nodes: $u_{1, \{0,0,0,0\}}$, $u_{1, \{0,0,1,1\}}$, $u_{4, \{0,0,0,0\}}$, and $u_{4, \{0,1,0,1\}}$. The 4-cycle cannot be represented by the modulo-2 sum of 3-cycles since there is no 3-cycle whose edge with the maximum index after matrix reduction borns earlier than edge 25, that is ($u_{1, \{0,0,1,1\}}$, $u_{4, \{0,1,0,1\}}$). Therefore the SCB of $H$ contains 4-cycle.

The same 4-cycle also exists in $G$, and it can be represented by 38 3-cycles:  $\{12, 288, 8\}$, $\{89,  94, 296\}$, $\{12, 148, 1\}$, $\{ 23, 215,  19\}$, $\{105, 108, 301\}$, $\{23, 282,  27\}$, $\{218, 282, 215\}$, $\{195, 318, 199\}$, $\{195, 316, 198\}$, $\{103, 105, 267\}$, $\{9, 144,   1\}$, $\{115, 121, 217\}$, $\{115, 124, 220\}$, $\{218, 313, 220\}$, $\{234, 314, 236\}$, $\{121, 124, 301\}$, $\{147, 318, 151\}$, $\{147, 316, 150\}$, $\{146, 314, 149\}$, $\{146, 312, 148\}$, $\{170, 234, 165\}$, $\{170, 313, 172\}$, $\{192, 198, 296\}$, $\{192, 199, 298\}$, $\{99, 108, 172\}$, $\{213, 267, 217\}$, $\{99, 101, 165\}$, $\{97, 103, 141\}$, $\{97, 109, 149\}$, $\{213, 266, 216\}$, $\{81,  89, 144\}$, $\{81,  94, 150\}$, $\{19, 216,  25\}$, $\{266, 270, 298\}$, $\{101, 109, 236\}$, $\{141, 270, 151\}$, $\{28, 312,  27\}$, $\{28, 288,  24\}$. The same situations exist for all other 4-cycles or cycles longer than 4. We also observe that there have been 281 3-cycles in the SCB. Considering that it is equal to the Betti number of $G$, the SCB does not contain any 4-cycle.

\textbf{The pair of graphs that 3-WL cannot distinguish while SCB can.} There are 24 3-cycles and 9 4-cycles in the SCB of the $4 \times 4$ Rook Graph, while there are 31 3-cycles and 2 4-cycles in the SCB of the Shrikhande Graph. Therefore the SCB can differentiate them.

\end{proof}

\begin{figure}[btp]
	\centering
	\subfigure[]{
		\begin{minipage}[t]{0.5\linewidth}
			\centering
			\includegraphics[width=0.95\columnwidth]{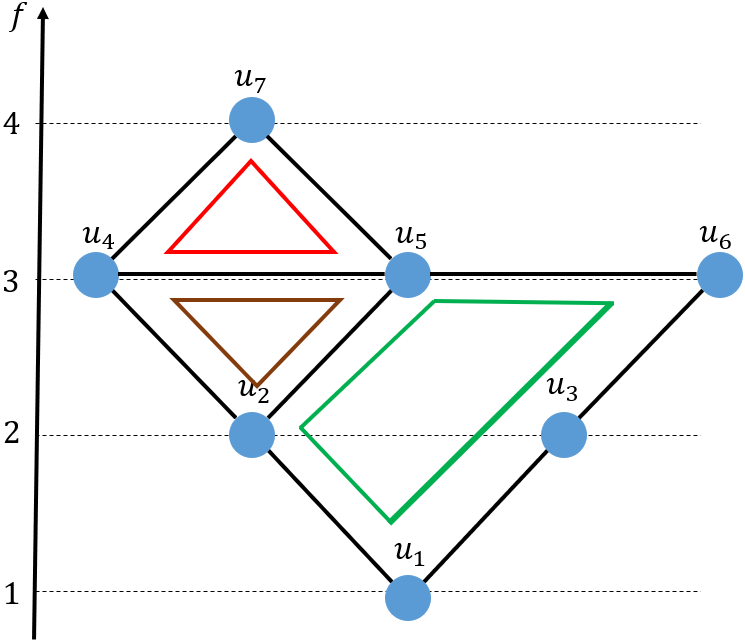}
		\end{minipage}%
	}%
	\subfigure[]{
		\begin{minipage}[t]{0.5\linewidth}
			\centering
			\includegraphics[width=0.95\columnwidth]{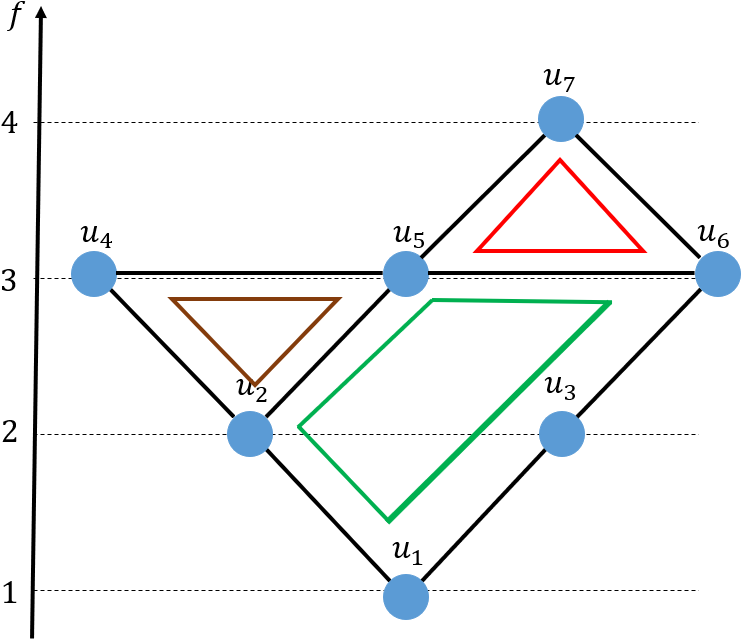}
		\end{minipage}%
	}%
	\centering
	\vspace{-.1in}
	\caption{Graphs that the PEOI encoding of the cycle incidence matrix can differentiate, while the number of cycles and the extended persistence diagrams cannot.}
	\label{fig:peoi}
	\vspace{-.15in}
\end{figure}


\section{Proof of 4.4 in the main paper}
\label{subsec:number}

We restate the theorem as follows:
\begin{theorem}
\label{th:number}
If choosing the same set of cycles. The PEOI encoding of the cycle incidence matrix is more powerful than using its number in terms of distinguishing non-isomorphic graphs.
\end{theorem}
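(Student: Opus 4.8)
The plan is to establish the two halves of ``more powerful'' separately: first, that the PEOI encoding of the cycle incidence matrix determines the number of chosen cycles, hence distinguishes every pair of non-isomorphic graphs that the raw count distinguishes; second, that there is a pair of non-isomorphic graphs on which the count (and even the extended persistence diagram) agrees while the PEOI encoding differs, witnessing a strict separation.

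For the first half, let $B$ denote the cycle incidence matrix for the chosen set of cycles, with columns indexed by the cycles and rows by the edges. Since the PEOI encoding is invariant to orthogonal changes of the cycle-space basis and equivariant to permutations of the edges, its canonical content is captured by the orthogonal projector $O$ onto the column space of $B$ (equivalently $BB^\top$ when the columns are taken orthonormal), exactly as in the proof of Theorem~\ref{th:cyclenet}. The number of chosen cycles equals the rank of $B$, which is $\operatorname{tr}(O)$, i.e.\ the sum of the diagonal entries of the projector. A permutation-invariant readout (a sum over edges) composed with the PEOI encoding therefore recovers this count, so whenever two graphs differ in their number of chosen cycles their PEOI encodings differ as well.

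For the second half, I would exhibit the pair of graphs depicted in Figure~\ref{fig:peoi}. The plan is to check by direct computation that the two graphs are non-isomorphic, have the same number of cycles in the chosen set (indeed the same first Betti number), and --- as indicated in the caption --- even share the same extended persistence diagram, so that neither the count nor the persistence summary separates them. One then computes the orthogonal projectors $O_1$ and $O_2$ of the two cycle spaces and verifies that the multisets of their entries (or, for instance, the per-column number of zero entries, as in the proof of Theorem~\ref{th:cyclenet}) disagree; since the $2$-$2$ layer of an IGN can produce these projectors and a subsequent MLP can extract a separating statistic, the PEOI encoding distinguishes the pair. Combining the two halves yields strict dominance.

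The main obstacle is the second half: one must produce an explicit small pair of graphs that simultaneously (i) are non-isomorphic, (ii) agree on the cycle count, and (iii) agree on the extended persistence diagram, yet (iv) have distinguishable cycle-space projectors, and then verify (i)--(iv) numerically. A secondary point that must be handled carefully is well-definedness: the ``same set of cycles'' should be picked by an isomorphism-equivariant rule, or else one must argue --- as above --- that the relevant PEOI output, namely the projector onto the chosen cycle space, does not depend on which particular basis of that space is used, so that the comparison between the two graphs is meaningful.
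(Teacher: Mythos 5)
There is a genuine gap, and it stems from conflating two different constructions in the paper. The PEOI encoding of Theorem~\ref{th:number} acts directly on the cycle incidence matrix of the \emph{chosen set of cycles} and is invariant only to permutations of those cycles (and equivariant over edges); it is \emph{not} invariant to orthogonal changes of the cycle-space basis, and its ``canonical content'' is not the orthogonal projector onto the cycle space. That projector-based, basis-invariant viewpoint belongs to CycleNet and the proof of Theorem~\ref{th:cyclenet}, not to CycleNet-PEOI. This misidentification breaks both halves of your argument. In the first half, you recover $\operatorname{tr}(O)=\operatorname{rank}(B)$, which equals the number of chosen cycles only when those cycles are linearly independent; the theorem's hypothesis allows dependent collections (the paper's own example discusses ``all the cycles'' and ``all cycles of a given length''), so rank does not certify the count. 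The intended argument is much more direct: instantiate the functions of Proposition 4.2 with $\rho_1\equiv 1$, $\rho_2$ ignoring the $X[i][k]$ argument and acting as the identity otherwise, and $\rho_3$ the identity, so that the PEOI output itself reads off the number of cycles; at-least-as-powerful then follows without any appeal to projectors.

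In the second half the gap is sharper: the separating statistic must be realizable \emph{by the PEOI encoding}, but you propose to compute the cycle-space projectors $O_1,O_2$ with a 2-2 IGN layer and compare zero patterns, machinery the PEOI architecture does not contain, and you leave unverified the key facts (that the two graphs of Figure~\ref{fig:peoi} are non-isomorphic with equal cycle counts, and that the projectors actually differ). The paper closes this by explicit computation: it writes down the two $9\times 3$ cycle incidence matrices, takes $\rho_1(X[i][k],X[j][k])=2X[i][k]+X[j][k]$, $\rho_2(X[i][k],Y)=\mathrm{ReLU}(Y-16)$, $\rho_3=\mathrm{id}$, and obtains the distinct edge multisets $\{4,4,2,6,4,4,4,2,2\}$ and $\{4,4,2,6,4,2,6,2,2\}$, while both graphs have the same numbers of cycles (two 3-cycles and one 5-cycle, three in total). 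Your well-definedness remark is reasonable in spirit, but resolving it by declaring the PEOI output a function of the projector alone is again inconsistent with how PEOI is defined; the theorem's premise ``choosing the same set of cycles'' is what fixes the comparison, and only invariance to the ordering of that set is needed. (The mention of extended persistence diagrams is also extraneous here; that comparison is the subject of a separate theorem.)
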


\begin{proof}

\textbf{PEOI can extract the number of cycles.} In Proposition 4.2 in the main paper, if we set $\rho_1$ as a function that consistently produces ``1", $\rho_2$ as a function that ignores the  $X[i][k]$ element while being an identity function for the rest elements, and $\rho_3$ as an identity function, we can obtain the number of cycles. Therefore, the PEOI encoding of the cycle incidence matrix is at least as powerful as the number of cycles.

Then we use the pair of graphs shown in Figure~\ref{fig:peoi} as an example. 

\textbf{The number of cycles cannot differentiate the pair of graphs.} In these two graphs the number of cycles will remain the same. For example, if using all the cycles, there are both 3 cycles in Figure~\ref{fig:peoi}(a) and Figure~\ref{fig:peoi}(b). If using cycles of a certain length, there are both 2 3-cycles and 1 5-cycle in Figure~\ref{fig:peoi}(a) and Figure~\ref{fig:peoi}(b). Therefore, only using the number of cycles cannot differentiate the pair of graphs.

\textbf{The PEOI encoding of the cycle incidence matrix can differentiate the pair of graphs.} The cycle incidence matrix of these two graphs is listed as follows:
\definecolor{emerald}{rgb}{0.31, 0.78, 0.47}
\begin{minipage}[t]{0.5\linewidth}
\begin{center}
$\begin{bmatrix}
 & \textcolor{emerald}{\gamma_g} & \textcolor{brown}{\gamma_b} & \textcolor{red}{\gamma_r} \\
(u_1, u_2) & 1 & 0 & 0 \\
(u_1, u_3) & 1 & 0 & 0 \\
(u_2, u_4) & 0 & 1 & 0 \\
(u_2, u_5) & 1 & 1 & 0 \\
(u_3, u_6) & 1 & 0 & 0 \\
(u_4, u_5) & 0 & 1 & 1 \\
(u_5, u_6) & 1 & 0 & 0 \\
(u_4, u_7) & 0 & 0 & 1 \\
(u_5, u_7) & 0 & 0 & 1\\
\end{bmatrix}$
\end{center}
\end{minipage}%
\begin{minipage}[t]{0.5\linewidth}
\begin{center}
$\begin{bmatrix}
 & \textcolor{emerald}{\gamma_g} & \textcolor{brown}{\gamma_b} & \textcolor{red}{\gamma_r} \\
(u_1, u_2) & 1 & 0 & 0 \\
(u_1, u_3) & 1 & 0 & 0 \\
(u_2, u_4) & 0 & 1 & 0 \\
(u_2, u_5) & 1 & 1 & 0 \\
(u_3, u_6) & 1 & 0 & 0 \\
(u_4, u_5) & 0 & 1 & 0 \\
(u_5, u_6) & 1 & 0 & 1 \\
(u_5, u_7) & 0 & 0 & 1 \\
(u_6, u_7) & 0 & 0 & 1\\
\end{bmatrix}$
\end{center}
\end{minipage}%

For Proposition 4.2 in the main paper, we can define $\rho_1 (X[i][k], X[j][k]) = 2X[i][k] + X[j][k]$, $\rho_2(X[i][k], Y) = RELU(Y - 16)$, and $\rho_3$ to be an identity function. Therefore, for the 
graph shown in Figure~\ref{fig:peoi}(a), the PEOI encoding is $\{4, 4, 2, 6, 4, 4, 4, 2, 2\}$; for the graph shown in Figure~\ref{fig:peoi}(b), the PEOI encoding is $\{4, 4, 2, 6, 4, 2, 6, 2, 2\}$. According to Proposition 4.1 in the main paper, we can differentiate the pair of graphs using CycleNet-PEOI.

Therefore, the PEOI encoding of the cycle incidence matrix is more powerful than the number of cycles.

\end{proof}

\section{Proof of Theorem 4.5 in the main paper}
\label{subsec:ph}

The classic EPDs~\cite{cohen2009extending} can be used to measure the saliency of connected components and high-order topological structures such as voids. However, recent works~\cite{yan2021link, yan2022neural, zhang2022gefl} have mainly used the one-dimensional (1D) EPD as augmented topological features, particularly the features corresponding to cycles. Therefore, in this section, we mainly focus on comparing our encoding with the \textbf{1D EPDs corresponding to cycles}. For ease of complexity, we will omit the terms ``1D" and ``that correspond to cycles" in the rest of this section, and only use ``EPDs".

\begin{theorem}
\label{th:ph}
If choosing the same set of cycles. The PEOI encoding of the cycle incidence matrix can differentiate graphs that cannot be differentiated by the extended persistence diagram. If adding the filter function to the cycle incidence matrix, the PEOI encoding of the cycle incidence matrix is more powerful than using its extended persistence diagram in terms of distinguishing non-isomorphic graphs.
\end{theorem}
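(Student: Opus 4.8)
The statement has two parts, and I would prove each by leveraging the "PEOI extracts more than a simple summary" template already used in the proof of Theorem \ref{th:number}. For the first part, the plan is to exhibit a pair of non-isomorphic graphs on which the PEOI encoding of the cycle incidence matrix differs but the 1D EPD (corresponding to cycles) is identical. The natural candidates are the two graphs already displayed in Figure \ref{fig:peoi}: I showed in the previous theorem that their PEOI encodings differ, so all that remains is to verify that their 1D extended persistence diagrams (for the appropriate canonical filter function, e.g.\ the constant filtration or the degree/shortest-cycle-length filtration used elsewhere in the paper) coincide. Concretely, I would fix the filtration, run extended persistence on each graph, and check that the multiset of birth-death pairs associated to the independent cycles is the same for both — intuitively because both graphs have the same number of cycles of each length and the same distribution of edge-birth values along those cycles, so the up-down persistence pairing produces identical diagrams. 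This reduces the first claim to a finite check on the Figure \ref{fig:peoi} examples.

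For the second part — that with a filter function added to the cycle incidence matrix, PEOI is strictly more powerful than the EPD — I would argue the two directions separately. The "at least as powerful" direction is the conceptual core: I would show that the 1D EPD is a function of the (filtered) cycle incidence matrix that PEOI can realize. Recall that extended persistence for cycles is computed by a matrix-reduction procedure on the boundary matrix sorted by filter value, and the birth-death pairs are exactly the indices where pivots appear — precisely the information the matrix-reduction Algorithm \ref{alg:MR} extracts, but now with rows and columns ordered by the filter function rather than by cycle length. Since PEOI (via Proposition 4.2 and Proposition 4.1 in the main paper) can simulate the aggregation of incidence patterns together with the attached filter values, I would choose the inner functions $\rho_1,\rho_2,\rho_3$ so that the resulting encoding determines the sorted pivot structure, hence the EPD. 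The strictness direction then follows again from a witness pair: I would reuse (or slightly perturb) the Figure \ref{fig:peoi} graphs so that, even after attaching the filter function, the EPDs remain equal while the PEOI encodings remain distinct — the filter function only refines the ordering and does not collapse the extra incidence information PEOI sees.

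The key steps in order are: (1) fix the canonical filter function on edges used throughout the paper and restate how the 1D EPD for cycles is obtained by reduction of the filtered boundary matrix; (2) show PEOI can encode, for each edge, its filter value together with its cycle-membership pattern, and that feeding this through $\rho_1,\rho_2,\rho_3$ recovers the pivot/pairing data, hence the EPD — establishing "at least as powerful"; (3) verify on the Figure \ref{fig:peoi} pair that the (unfiltered) EPDs agree, giving the first sentence of the theorem; (4) verify that with the filter function the EPDs still agree on this pair (or a minor modification) while the PEOI encodings differ, giving strictness.

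The main obstacle I anticipate is step (2): making precise the claim that PEOI "recovers" the EPD. The EPD is the output of a global, order-dependent reduction (Gaussian elimination over $\mathbb{F}_2$ with a specific column order), whereas PEOI is a bounded-depth aggregation of local incidence rows. I would need to argue either that the relevant EPD is determined by order statistics / counts that PEOI can compute directly (the cleanest route, if the filter function takes few distinct values), or that one only needs PEOI to be expressive enough to match whatever the EPD distinguishes on the class of graphs considered, rather than literally simulate the reduction. I expect the honest version of the argument to go through the former: because the cycle incidence matrix together with per-edge filter values already contains the sorted boundary matrix up to basis change, and the persistence pairing is a basis-independent invariant of that filtered complex, the EPD is a function of the data PEOI encodes — so "at least as powerful" holds, and the finite witness supplies strictness.
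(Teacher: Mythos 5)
Your overall architecture mirrors the paper's: part one is settled by the Figure~\ref{fig:peoi} pair (PEOI encodings differ by the proof of Theorem~\ref{th:number}, while the 1D EPDs computed from a shortest-path-distance filter from a root both come out to $\{(3,1),(3,2),(4,3)\}$), and part two is "at least as powerful plus a strictness witness." Where you diverge — and where the gap lies — is the "at least as powerful" step. You plan to have PEOI simulate the global matrix-reduction/pivot pairing that computes extended persistence, correctly flag that a bounded-depth equivariant aggregation cannot obviously perform this order-dependent elimination, and then fall back on the claim that the EPD is a basis-independent function of the filtered complex and hence "of the data PEOI encodes." That fallback is not a proof: the fact that the EPD is information-theoretically determined by the filter-enhanced incidence matrix does not show it is realizable (or that its distinctions are preserved) within the specific $\rho_1,\rho_2,\rho_3$ aggregation scheme of Proposition 4.2, which is exactly what the theorem requires.

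The missing idea that the paper uses to close this step is a local characterization of the 1D extended persistence points: for a cycle, its extended persistence point is simply the pair (maximum filter value, minimum filter value) of the points on that cycle (cited from prior work). With this, no simulation of the reduction is needed. The paper attaches the filter by multiplying each edge's filter value (the minimum of its endpoint values) into the corresponding rows of the cycle incidence matrix, then chooses $\rho_1$ to be a minimum function (explicitly approximated by a small two-layer ReLU MLP), $\rho_2$ to drop the $X[i][k]$ argument and pass the rest through, and $\rho_3$ the identity, so that PEOI extracts each chosen cycle's extremal filter values; the maximum case is handled symmetrically. If two graphs have different EPDs over the same chosen set of cycles, some cycle's extremal value differs, so the multisets of per-cycle embeddings differ and an injective readout separates the graphs. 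To repair your argument you would need either this (max,\,min) characterization or an equally concrete construction of $\rho_1,\rho_2,\rho_3$; the "order statistics" route you mention is essentially this, but it must be made explicit rather than asserted, since it is the crux of the second half of the theorem.
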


\begin{proof}

\textbf{The extended persistence diagram (EPD).}  Persistent homology~\cite{edelsbrunner2022computational, edelsbrunner2000topological} captures topological structures such as connected components and cycles, and summarizes them in a point set called the persistence diagram (PD). It is found that the extended persistence diagrams (EPD)~\cite{cohen2009extending} is a variant of PD that encodes richer cycle information.  Specifically, an EPD is a set of points in which every point represents the significance of a topological structure in terms of a scalar function known as the filter function. Recent studies have shown that the extended persistence point of a cycle is the combination of the maximum and minimum filter values of the point in the cycle~\cite{yan2022neural}. Note that in this paper, we focus on the EPDs of cycles, and do not consider the EPDs of other structures.

We illustrate the computation of the EPD for the graph in Figure~\ref{fig:peoi}(a),  where the filter function is defined as the shortest path distance from a selected root node $u_1$ to other nodes. This is a common filter function used in previous models~\cite{zhao2020persistence, yan2021link}. We plus one to the filter value in case of the zero value. Using this definition, we have: $f(u_1) = 1$, $f(u_2) = f(u_3) = 2$, $f(u_4) = f(u_5) = f(u_6) = 3$, and $f(u_7) = 4$. The extended persistence point of the red cycle, the brown cycle, and the green cycle are $(3, 1)$, $(3, 2)$, and $(4, 3)$, respectively. Therefore the EPD of Figure~\ref{fig:peoi}(a) is $\{(3, 1), (3, 2), (4, 3)\}$. We can define similarly the filter value for Figure~\ref{fig:peoi}(b), and the extended persistence points of the three cycles are the same as the cycles in Figure~\ref{fig:peoi}(a). Therefore, the EPD of Figure~\ref{fig:peoi}(b) is also $\{(3, 1), (3, 2), (4, 3)\}$, and the EPD cannot differentiate the pair of graphs. Note that the PEOI encodings for these two graphs are different, as shown in the proof of Theorem~\ref{th:number}.

\textbf{Add the filter function to CycleNet-PEOI.} It is worth noting that the filter function, which plays a crucial role in constructing the EPD, is not explicitly contained in the cycle incidence matrix. As a result, encoding the original cycle incidence matrix using the proposed PEOI method is not sufficient to extract the EPD.

However, we can incorporate the filter function into the proposed model by adding the filter function to the cycle incidence matrix. For example, we define the filter value of an edge as the minimum value of the nodes in the edge, and obtain the filter values of the edges in Figure~\ref{fig:peoi}(a) as $\{1, 1, 2, 2, 2, 3, 3, 3, 3\}$. Next, we compute the dot product between the filter values of edges and the cycle incidence matrix, which results in the so-called filter-enhanced cycle incidence matrix. Similarly, we can obtain the filter-enhanced cycle incidence matrix for Figure~\ref{fig:peoi}(b). The two matrices are listed below:
\definecolor{emerald}{rgb}{0.31, 0.78, 0.47}
\begin{minipage}[t]{0.5\linewidth}
\begin{center}
$\begin{bmatrix}
 & \textcolor{emerald}{\gamma_g} & \textcolor{brown}{\gamma_b} & \textcolor{red}{\gamma_r} \\
(u_1, u_2) & 1 & 0 & 0 \\
(u_1, u_3) & 1 & 0 & 0 \\
(u_2, u_4) & 0 & 2 & 0 \\
(u_2, u_5) & 2 & 2 & 0 \\
(u_3, u_6) & 2 & 0 & 0 \\
(u_4, u_5) & 0 & 3 & 3 \\
(u_5, u_6) & 3 & 0 & 0 \\
(u_4, u_7) & 0 & 0 & 3 \\
(u_5, u_7) & 0 & 0 & 3\\
\end{bmatrix}$
\end{center}
\end{minipage}%
\begin{minipage}[t]{0.5\linewidth}
\begin{center}
$\begin{bmatrix}
 & \textcolor{emerald}{\gamma_g} & \textcolor{brown}{\gamma_b} & \textcolor{red}{\gamma_r} \\
(u_1, u_2) & 1 & 0 & 0 \\
(u_1, u_3) & 1 & 0 & 0 \\
(u_2, u_4) & 0 & 2 & 0 \\
(u_2, u_5) & 2 & 2 & 0 \\
(u_3, u_6) & 2 & 0 & 0 \\
(u_4, u_5) & 0 & 3 & 0 \\
(u_5, u_6) & 3 & 0 & 3 \\
(u_5, u_7) & 0 & 0 & 3 \\
(u_6, u_7) & 0 & 0 & 3\\
\end{bmatrix}$
\end{center}
\end{minipage}%

\textbf{Define the PEOI encoding.} We can use a 2-layer MLP to approximate the minimum function between two elements. The hidden layer contains 4 nodes, and the ReLU activation function is used. The weights from the input layer to the hidden layer are $(1, 1)$, $(1, -1)$, $(-1, 1)$, and $(-1, -1)$, respectively, and the biases are set to 0 for all nodes. The weights from the hidden layer to the output layer are 0.5, -0.5, -0.5, -0.5, respectively.

In Proposition 4.2 in the main paper, We set $\rho_1$ as the minimum function, which is approximated by the 2-layer MLP. $\rho_2$ is defined as a function that ignores the  $X[i][k]$ element while being an identity function for another element. $\rho_3$ is set as an identity function.

\textbf{The defined encoding can differentiate the graphs that EPD can also differentiate.} Assume that there exists a pair of graphs $G_1$ and $G_2$ whose EPDs are different, we can assume that there exist a pair of cycles whose lowest filter values are different (the highest filter values can be treated similarly). Under this assumption, we can define the filter value for edges as the minimum value of the nodes in the edge. Using the PEOI encoding defined above, we can extract the lowest filter value of these two cycles. We then use an injective function on the multiset of cycle embeddings to produce different outputs for these two graphs. Therefore the defined encoding can differentiate $G_1$ and $G_2$.



In conclusion, by incorporating the filter function, CycleNet-PEOI can differentiate all pairs of graphs that the EPDs can differentiate, and can distinguish graphs that EPDs cannot. Therefore, it is more powerful than EPDs in terms of distinguishing non-isomorphic graphs.

\end{proof}

\section{Implementation details}

\textbf{Encoding of CycleNet-PEOI.} Based on Proposition 4.2 in the main paper, we provide a pytorch-like pseudo-code for the PEOI encoding in Figure~\ref{fig:code}. 

\lstset{
  backgroundcolor=\color{white},
  basicstyle=\fontsize{7.5pt}{8.5pt}\fontfamily{lmtt}\selectfont,
  columns=fullflexible,
  breaklines=true,
  captionpos=b,
  commentstyle=\fontsize{8pt}{9pt}\color{blue},
  keywordstyle=\fontsize{8pt}{9pt}\color{purple},
  stringstyle=\fontsize{8pt}{9pt}\color{blue},
  frame=tb,
  otherkeywords = {self},
}

\begin{figure}[ht]
    \centering
    \begin{minipage}{\columnwidth}
    \begin{lstlisting}[language=python,
        title={PyTorch-like pseudo-code for PEOI encoding},
        captionpos=t]
    class PEOI(nn.Module):
    
        def __init__(self, D1, D2, D3):
            self.rho1 = MLP(2, D1) # in dim=2, out dim=D1
            self.rho2 = MLP(1+D1, D2) # in dim=1+D1, out dim=D2
            self.rho3 = MLP(D2, D3) # in dim=D2, out dim=D3
            
        def forward(self, x):
            # x shape: m x g
            
            m, g = x.shape
            x1 = x.reshape(m, 1, 1, g).expand(-1, m, 1, -1) # m x m x 1 x g
            x2 = x.reshape(1, m, 1, g).expand(m, -1, 1, -1) # m x m x 1 x g
            w = self.rho1(cat((x1; x2), dim = 2)).sum(dim = 1) # m x m x 2 x g -> m x D1 x g
            w = self.rho2(cat((x.reshape(m, 1, g); w), dim = 1)).sum(dim = 2) # m x (1+D1) x  g -> m x D2
            w = self.rho3(w) # m x D2 -> m x D3
            
            return w
    \end{lstlisting}
    \end{minipage}
    \caption{PyTorch-like pseudo-code for PEOI encoding. Here ``cat((x, y), dim = c)" denotes the concatenation of two matrices on the c-th dimension. ``sum(dim=c)" denotes the sum operation over the c-th dimension.}
    \label{fig:code}
\end{figure}

In certain situations where graphs are dense and large, the original PEOI encoding may bring extra computational and memory costs. In these situations, we can ignore the final $X[i][k]$ element in Proposition 4.2 in the main paper, and then the memory cost will be no larger than $O(m \times g)$.

\textbf{Encoding of CycleNet.} The full approximation power requires high-order tensors to be used for the IGN~\cite{maehara2019simple, maron2019universality, keriven2019universal}. In practice, we follow the settings of~\cite{lim2022sign} and restrict the tensor dimensions for efficiency. This encoding, although losing certain theoretical power, shows strong empirical performance in~\cite{lim2022sign}.

\textbf{Experimental details.} In the synthetic experiments in the main paper, we use a 5-layer GIN~\cite{xu2018powerful} as the backbone model. We set the hidden dimension to 128, batch size to 16, and learning rate to 1e-3 with Adam as the optimizer. We use a ReduceLROnPlateau scheduler with the reduction factor set to 0.7,  the patience set to 10, and the minimum learning rate set to 1e-6. In the synthetic experiments related to cycles, we use a point cloud dataset sampled on small cycles whose centers are on a big cycle. The diameters of the large cycle and small cycle are set to 20 and 1, respectively. We randomly sample 20 points from the large cycle and 60 points from the small cycle. After obtaining the node set, we generate a k-nearest-neighbor graph with the parameter k set to 3. There is no input feature for the prediction of the Betti Number. As for the prediction of EPD, we use the position of the node as the filtration function of the EPD. The input node feature is therefore the coordinates of the nodes.

For real-world benchmarks, we use SignNet or CWN as the backbone model on ZINC. Our settings follow exactly the settings of SignNet or CWN. For the superpixel classification and the trajectory classification benchmarks, we use SAT as the backbone model. Our settings follow exactly the settings of SAT. For the homology localization benchmark, we use Dist2cycle as the backbone model. Our settings follow exactly the settings of Dist2cycle. Notice that for backbone models that fill the cycles with 2-cells, the kernel space of the Hodge Laplacian may not contain any information. Therefore, we replace the kernel space encoding with the encoding based on the original Laplacians. All the experiments are implemented with two Intel Xeon Gold 5128 processors,192GB RAM, and 10 NVIDIA 2080TI graphics cards.

\textbf{The assets we used.} Our model is experimented on benchmarks from~\cite{dwivedi2020benchmarking, goh2022simplicial, keros2022dist2cycle, mnist, bodnar2021weisfeiler, balcilar2021breaking} under the MIT license.

\textbf{Limitations of the paper.} First, we have shown that the representation power of our model is bounded by high-order WLs in terms of distinguishing non-isomorphic graphs.

Second, the proposed model may not perform well on benchmarks where cycle information is not relevant. For example, in high-order graphs where cycles are replaced by high-order structures like triangles or cells, the proposed CycleNet-PEOI model may not be suitable.







\section{Additional experiments}


We present additional evaluations on (1) the memory cost in terms of the number of trainable parameters; (2) the effectiveness of the introduced cycle-related embedding on a wider range of settings; and (3) the comparison between the original Hodge Laplacian and the cycle space of the Hodge Laplacian.

To conduct the evaluation, we follow the settings of~\cite{lim2022sign} and report the results in Table~\ref{tab:ablation}. Specifically, we name the framework \textbf{CycleNet-Hodge}, which replaces the orthogonal projector of the cycle space of the Hodge Laplacian with the original Hodge Laplacian. Notably, we follow the implementation of IGN in~\cite{lim2022sign}, which restricts the tensor dimensions for efficiency, leading to a slight theoretical limitation but strong empirical performance. 

We find from the table that the proposed cycle-related information improves the performance of all backbones while only adding a few extra learnable parameters. This provides empirical evidence that the proposed structural embedding is robust across different backbone models. Additionally, CycleNet outperforms CycleNet-Hodge across all backbones, indicating that the basis-invariant encoding of the cycle space is better at extracting useful cycle-related information. We also observe that BasisNet introduces too many additional parameters and performs worse than our model, demonstrating the trade-off between computational efficiency and theoretical representation power when generating a basis-invariant encoding for all eigenspaces. Furthermore, comparing CWN to CycleNet, CWN achieves comparable results with CycleNet and CycleNet-PEOI, indicating its strong representation power. However, CWN introduces too many trainable parameters, leading to high memory and computational costs.

\begin{table}
		\centering
		\caption{Additional experiments on ZINC}  
        \label{tab:ablation}
		\scalebox{1.0}{
			\begin{tabular}{lcc}
					\hline\noalign{\smallskip}
					Framework & test MAE & Params\\
					\noalign{\smallskip}\hline\noalign{\smallskip}
                    GIN & 0.220 &  497394\\
                    + CycleNet-Hodge &  0.165 & 543876   \\
					+ CycleNet &  0.153 &  543876  \\
					+ CycleNet-PEOI & 0.153 &  512812  \\
					+ BasisNet & 0.169 & 751810 \\
     			\noalign{\smallskip}\hline\noalign{\smallskip}
					GatedGCN & 0.259 & 491597\\
                    + CycleNet-Hodge &  0.142 & 510539  \\
					+ CycleNet & 0.137 & 510539  \\
					+ CycleNet-PEOI &  0.188 & 504090\\
					+BasisNet & 0.139 & 716793 \\
     			\noalign{\smallskip}\hline\noalign{\smallskip}
					PNA & 0.145 & 473681\\
					+ CycleNet-Hodge  & 0.128 & 479081\\
					+ CycleNet &  0.089 & 479081 \\
					+ CycleNet-PEOI &  0.111 & 483769\\
					+ BasisNet & 0.094 & 556323 \\
     			\noalign{\smallskip}\hline\noalign{\smallskip}
					SignNet & 0.084 & 487082\\
					+ CycleNet-Hodge & 0.081 & 492482\\
					+ CycleNet &  \textbf{0.077}  & 492482 \\
					+ CycleNet-PEOI &  0.082 & 497170\\
     			\noalign{\smallskip}\hline\noalign{\smallskip}
                    CWN & 0.079 & 2435785 \\
					\noalign{\smallskip}\hline\noalign{\smallskip}
		\end{tabular}}
\end{table}

\nocite{langley00}

\bibliography{example_paper}
\bibliographystyle{plain}


\end{document}